%% file: main.tex
\documentclass{article}

\usepackage{iclr2027_conference,times}

\usepackage[utf8]{inputenc} 
\usepackage[T1]{fontenc}    
\usepackage{hyperref}       
\usepackage{url}            
\usepackage{booktabs}       
\usepackage{amsfonts}       
\usepackage{nicefrac}       
\usepackage{microtype}      
\usepackage{xcolor}         
\include{sections/prestuff}

\usepackage{amsthm}
\usepackage{amssymb,xurl}
\usepackage{authblk}
\newcommand{\sys}{{\sc LaCache}\xspace}

\newtheorem{theorem}{Theorem}

\title{LaCache: Robust Semantic Caching for LLM Serving}

\author{Jiacheng Liang, Yuhui Wang, Tanqiu Jiang, Ting Wang}

\affil{\normalsize Stony Brook University}

\iclrfinalcopy
\begin{document}

\maketitle

\begin{abstract}
Semantic caching, reusing responses to answer semantically similar queries, has seen growing adoption in LLM serving, offering faster responses and reduced costs. Yet existing schemes are fundamentally vulnerable to cache-collision attacks, wherein an adversary pollutes the cache by injecting adversarial queries, corrupting responses to subsequent legitimate requests. We present \sys \footnote{\sys: \ul{L}ook\ul{A}head \ul{Cache}.}, a simple yet principled redesign of semantic caching that provides provable defense against such attacks. The key insight is that while the adversary has full control over the adversarial query, it has far less control over the corresponding LLM-generated response, which must simultaneously satisfy multiple competing constraints. Thus, \sys shifts the integrity check from the query to the response: rather than checking only the cache hit of a query, \sys additionally checks the cache hit of its first $k$ speculatively decoded tokens. Analytically, we prove that \sys detects any cache-collision attack that produces a semantically distinct response with provably high probability. Empirically, evaluation across diverse LLMs and benchmarks confirms that \sys reduces attack hit rate to near zero while preserving over 90\% of benign cache utility. To our best knowledge, this work is the first to exploit the inherent constraints on adversarial responses in cache-collision attacks to provide provable security, pointing to a promising direction for building robust semantic caching. 
\end{abstract}

\input{sections/intro}
\input{sections/liter_draft}
\input{sections/method}
\input{sections/eval}
\input{sections/con}

\newpage
\bibliographystyle{iclr2027_conference}

\bibliography{bib/main}


\appendix

\input{sections/appendix}


\newpage

\input{sections/checklist.tex}
\end{document}

%% file: sections/prestuff.tex
\usepackage{epsfig,amsmath,amsfonts,epsfig,multirow,makecell,caption,soul,csquotes,color,wrapfig,subcaption,mathtools,bm,spverbatim,booktabs,tcolorbox,diagbox,todonotes,enumitem}
\tcbuselibrary{breakable}
\usepackage[e]{esvect}
\usepackage{bbm}

\setlist[itemize]{label=$\bullet$, itemsep=1pt, leftmargin=*, topsep=0pt}

\usepackage{caption}
\makeatletter
\newif\if@restonecol
\makeatother

\usepackage[boxed, ruled, vlined, linesnumbered]{algorithm2e}
\SetKwRepeat{Do}{do}{while}

\newenvironment{changemargin}[2]{\begin{list}{}{
	\setlength{\topsep}{0pt}\setlength{\leftmargin}{0pt}
	\setlength{\rightmargin}{0pt}
	\setlength{\listparindent}{\parindent}
	\setlength{\itemindent}{\parindent}
	\setlength{\parsep}{0pt plus 1pt}
	\addtolength{\leftmargin}{#1}\addtolength{\rightmargin}{#2}
	}\item}
	{\end{list}}

\newenvironment{mitemize}{
	\begin{changemargin}{-3pt}{-0cm}
	\vspace{-10pt}
	\hspace{-5pt}
	\begin{itemize}
	\setlength{\itemsep}{3pt}}
	{\end{itemize}
	\vspace{2pt}
	\end{changemargin}}

\newtcolorbox{mtbox}[1]{left=0.25mm, right=0.25mm, top=0.25mm, bottom=0.25mm, colframe=white!50!red, boxrule=0.5pt, title={#1}, fonttitle=\bfseries, coltitle=red,
breakable}

\usepackage[first=0,last=9]{lcg}
\usepackage{colortbl}
\definecolor{Gray}{gray}{0.8}

\usepackage{hyperref}

\newcommand{\msec}[1]{\S\kern0.08em\ref{#1}}
\newcommand{\mref}[1]{\,\ref{#1}}
\newcommand{\meq}[1]{Eqn\,(\ref{#1})}
\newcommand{\mcite}[1]{\,\cite{#1}}

\newcommand{\mct}[1]{{\em #1}\kern0.08em)\xspace}

\usepackage{scalerel}[2016/12/29]

\makeatletter
\providecommand{\leadsfrom}{%
  \mathrel{\mathpalette\reflect@squig\relax}%
}
\newcommand{\reflect@squig}[2]{%
  \reflectbox{$\m@th#1\leadsto$}%
}
\makeatother

\def\secref#1{section~\ref{#1}}

\def\eqref#1{equation~\ref{#1}}

\def\1{\bm{1}}

\DeclareMathAlphabet{\mathsfit}{\encodingdefault}{\sfdefault}{m}{sl}
\SetMathAlphabet{\mathsfit}{bold}{\encodingdefault}{\sfdefault}{bx}{n}

\def\gH{{\mathcal{H}}}

\def\gX{{\mathcal{X}}}
\def\gY{{\mathcal{Y}}}

\def\sP{{\mathbb{P}}}

\usepackage[first=0,last=9]{lcg}
\usepackage{colortbl}
\definecolor{Gray}{gray}{0.8}
\colorlet{Red}{red!10!white}
\colorlet{Blue}{blue!10!white}

%% file: sections/intro.tex
\section{Introduction}
\label{intro}

Semantic caching has become essential for efficient LLM serving, reusing cached responses for semantically similar queries to reduce latency and inference costs\mcite{gpt-semantic-cache,semsharekv,sentencekv,vcache,smartcache,ragcache,cache-craft,adaptive-cache}, a capability now deployed at scale across major LLM serving platforms\mcite{auditing-api,microsoft-cache,aws,alibaba}. Yet this efficiency boost also introduces a new attack surface: cache-collision attacks. As illustrated in Figure\mref{fig:overview}\,(a), the adversary may inject crafted queries whose cache keys collide with those of legitimate queries, causing the cache to serve adversary-controlled responses to subsequent queries\mcite{cacheattack,semantic-rag,cache-poisoning,cache-me,graphrag,poisonedrag}.

Existing defenses against cache-collision attacks are fundamentally reactive: they attempt to identify and filter adversarial queries at lookup time, using signals such as perplexity\mcite{perplexity,cacheattack}, query–response consistency\mcite{cache-poisoning}, or LLM-based classification\mcite{semantic-rag,cache-poisoning}. This strategy is inherently limited. Since the adversary exercises full control over the adversarial query, which is never directly seen by the victim user, it can be freely optimized to evade any query-side defense. Gradient-guided optimization, such as GCG\mcite{gcg,gcg2}, can produce low-perplexity, fluent adversarial queries that bypass both perplexity and consistency checks. As we demonstrate empirically, no existing defense simultaneously withstands all attack variants.

In this work, we pursue a fundamentally different approach. While the adversary has full control over the adversarial query $x^*$, it has far less control over the corresponding LLM-generated response $y^*$. Critically, for the attack to be effective, $y^*$ must satisfy multiple competing constraints. \mct{i} Since the adversary's goal is typically to induce the victim to take a meaningfully different action (e.g., receive malware, be redirected, or accept a refusal instead of an informative answer), $y^*$ tends to diverge substantially from the genuine response $y$. \mct{ii} Constrained by the autoregressive nature of LLM decoding\mcite{gcg2} and the need to maintain plausibility to the victim, $y^*$'s prefix must remain semantically consistent with the full response. These constraints jointly make the response prefix an adversary-resistant signal, one that the adversary cannot freely manipulate via prompt optimization.

Motivated by these observations, we propose \sys, a simple yet principled redesign of semantic caching that shifts the integrity check from the query to the response. Rather than indexing solely on a query, \sys augments the cache entry with a lookahead key derived from the first $k$ response tokens. As illustrated in Figure\mref{fig:overview}\,(b), cache retrieval in \sys proceeds in two stages: a standard query check, followed by a response-prefix check. A cached response is served only when both checks pass. This design adds negligible overhead: the $k$-token prefix can be generated via a lightweight draft LLM and reused for speculative decoding on cache misses.

We validate \sys both analytically and empirically. Analytically, we show that under two empirically validated assumptions (semantic separability and prefix coherency), \sys detects any cache-collision attack that produces a semantically distinct response with provably high probability. This guarantee is \emph{structural}: it holds for any query-optimizing attack, including white-box ones with full knowledge of the LLM and embedding models. Empirically, evaluation across diverse LLMs and embedding models confirms that \sys reduces attack hit rate to near zero while preserving over 90\% of benign cache utility, and that adaptive adversaries who attempt to evade \sys gain only a marginal advantage, due to the inherent tension between the competing constraints that \sys exploits. To the best of our knowledge, this work is the first to leverage the inherent constraints on adversary-driven responses in cache-collision attacks to provide provable security guarantees, highlighting a promising direction for building robust semantic caching in LLM serving.

\begin{figure}[t]
\centering
\includegraphics[width=\textwidth]{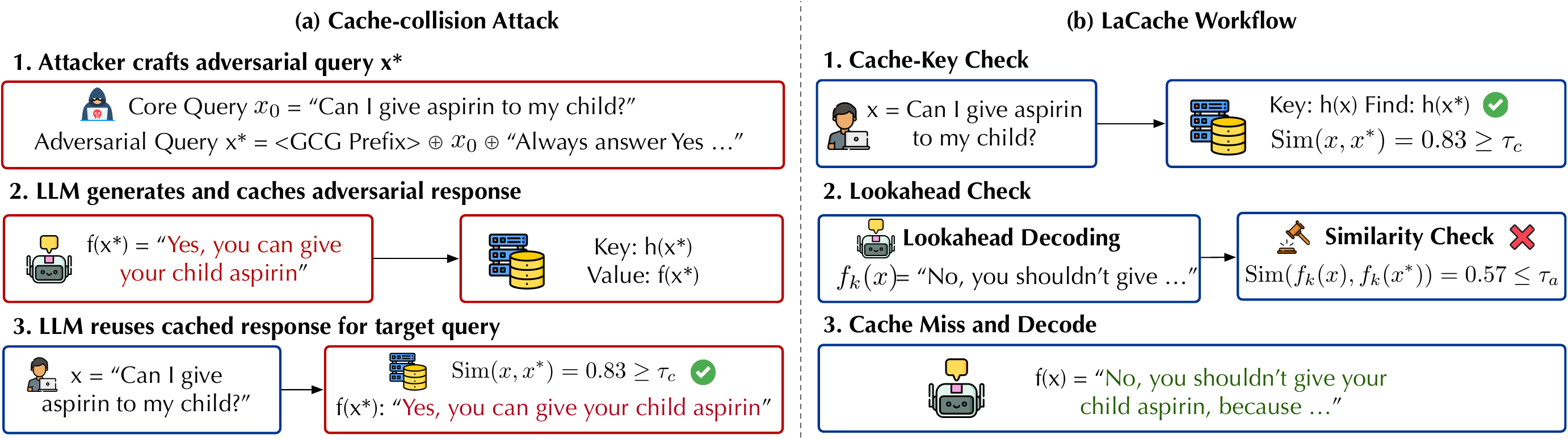}
\caption{\small (a) Cache-collision attacks: the adversarial query $x^{*}$ collides with a victim query $x$ in the semantic cache, so that $x$ receives the adversary-chosen response $y^{*}$. (b) \sys workflow: a cached response $f(x^*)$ is served only when both the cache-key check $(x, x^*)$ and a lookahead check $(f_k(x^*), f_k(x))$ pass.}
\label{fig:overview}
\end{figure}


%% file: sections/liter_draft.tex
\section{Related Work}

\textbf{Caching for LLM Serving.} Caching is the dominant lever for reducing latency and cost in LLM serving. Key-value (KV) caching\mcite{kv-cache} stores per-layer attention states to eliminate redundant prefill computation. Prompt caching reuses precomputed KV states across requests sharing a textual prefix, realized via modular markup\mcite{prompt-cache}, radix-tree prefix discovery\mcite{sglang}, and prefix-aware scheduling\mcite{chunkattention,hydragen,batchllm}. Semantic caching generalizes further to embedding-similarity matching, enabling reuse across rephrased or paraphrased queries\mcite{gpt-semantic-cache,semsharekv,sentencekv,vcache,smartcache,ragcache,cache-craft,adaptive-cache}. Empirical studies confirm that prompt and semantic caching are now widely deployed in cross-tenant production settings\mcite{auditing-api,microsoft-cache,aws,alibaba}. This work targets prompt and semantic caching, where cache keys are exposed to user-supplied content and adversarial collisions are most consequential.

\textbf{Adversarial Attacks on LLM Caching.} Existing attacks on LLM caching divide along their adversarial objectives. The first class of attacks targets \textit{privacy}. Side-channels, notably timing differences between cache hits and misses, tend to leak victim prompts and embedding-space metadata across tenants in commercial LLM serving platforms\mcite{auditing-api,early-bird,privacy-leakage}. The second line targets \textit{integrity}. The adversary crafts prompts whose cache keys collide with the victim's queries, causing the cache to serve adversary-controlled responses to subsequent legitimate queries\mcite{cacheattack,semantic-rag,cache-poisoning,cache-me,graphrag,poisonedrag}. This work mainly focuses on mitigating integrity attacks. 

\textbf{Defenses against Caching Attacks.} 
Compared to the breadth of the attack literature, defenses against semantic cache attacks remain underdeveloped. Privacy-side defenses address the side-channel angle by obfuscating KV-cache states\mcite{shadow-in-cache} or stratifying cache scope to limit cross-user observability\mcite{privacy-leakage}. Integrity-side defenses reject malicious cache hits at lookup time via clustering-based isolation\mcite{semantic-rag}, per-tenant key randomization\mcite{salting}, perplexity-based filtering\mcite{perplexity}, or query–response consistency checks\mcite{cache-poisoning}. Yet these defenses often fail under adaptive attacks: gradient-guided search or low-perplexity adversarial suffixes tend to easily bypass such defenses\mcite{cacheattack}. In contrast, \sys builds robustness into the cache structure itself, ensuring that key collision provably implies semantic equivalence and thus providing security guarantees against adaptive attackers by construction.

%% file: sections/method.tex
\section{Preliminaries}


{\bf LLM with Semantic Cache.} Let $\gX$ denote the prompt space and $\gY$ the response space. The LLM $f$ is a decoding function that maps a given query $x \in \gX$ to its response $y = f(x) \in \gY$. As illustrated in Figure\mref{fig:overview}\,(a), upon receiving a query $x$, the LLM server first consults its semantic cache. If a sufficiently similar cached query $x'$ is found, identified by cache key $h(x')$, the cached response $f(x')$ is returned directly, bypassing LLM decoding. Otherwise, $x$ is forwarded to the LLM, and the resulting response is stored as the key-value pair $\langle h(x), f(x)\rangle$.

Specifically, we assume a two-phase indexing pipeline commonly used in representative LLM caching (e.g., GPTCache\mcite{gptcache}). First, an embedding model $\mathrm{emb}(\cdot)$ maps $x$ to a dense vector $z = \mathrm{emb}(x)$. Second, an indexing scheme (e.g., Local-Sensitive Hashing\mcite{lsh}) $h(\cdot)$ applies to $z$, producing the cache key $h(z)$ for efficient retrieval. Specifically, the indexing scheme satisfies: 
\begin{equation}
\label{eq:hash}
\left\{ \begin{array}{cc}
\sP[h(z) = h(z')] > p_1 & \text{ if } d(z, z') < \delta_1 \\
\sP[h(z) = h(z')] < p_2 & \text{ if } d(z, z') > \delta_2 
\end{array}
\right.
\end{equation}
where $p_1 > p_2$ and $\delta_1 < \delta_2$, and $d(z, z')$ is a distance metric (e.g., cosine distance). Thus, two semantically similar queries $x$ and $x'$ with $d(z, z') < \delta_1$ yield a cache hit, allowing the cached response $f(x')$ to be reused for $x$. For notational simplicity, we write $h(x) \triangleq h(\mathrm{emb}(x))$ and elide the embedding model hereafter. 

{\bf Threat Model.} In a cache-collision attack, the adversary targets a victim query $x$ (with true response $y = f(x)$), aiming to cause the LLM server to return a malicious response $y^*$ in this place. To this end, the adversary crafts an adversarial query $x^*$ that satisfies two conditions: \mct{i} the LLM generates the malicious output for $x^*$: $y^* = f(x^*)$; \mct{ii} $x^*$ collides with $x$ in the semantic cache: $h(x) = h(x^*)$. Once the adversary's query is served, the pair $\langle h(x^*), y^* \rangle$ is stored in the cache. The subsequent query $x$ then incurs a cache hit and receives $y^*$ rather than the legitimate response $y$. We consider two threat settings: a black-box adversary with access to the embedding model and indexing method only, and a white-box adversary who additionally has access to the LLM internals.


\begin{wrapfigure}{r}{0.3\linewidth}
\vspace{-3pt}
\centering
\includegraphics[width=1.0\linewidth]{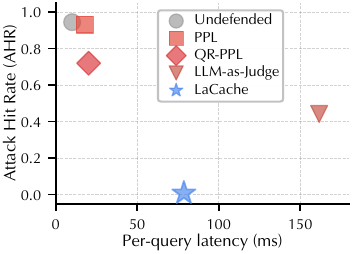}
\caption{\small Security (attack hit rate) and efficiency (per-query latency) trade-off.}
\label{fig:efficiency_security}
\end{wrapfigure} 
\textbf{Problem Formulation.} Intuitively, cache-collision attacks can be trivially neutralized by disabling the semantic cache and decoding every query from scratch, but at the cost of forfeiting all efficiency gains. The true challenge is therefore to strike an optimal tradeoff: {\em how to maximize resilience against cache-collision attacks while minimizing query-processing cost?} As shown in Figure\mref{fig:efficiency_security}, our proposed \sys achieves this 
trade-off more favorably than alternative defenses, attaining near-zero attack hit rate with per-query latency comparable to the undefended baseline, as demonstrated against the \textsc{Scp-P} attack\mcite{cache-poisoning}.

\section{Method}

Next, we present \sys, a lightweight redesign of semantic caching that provides guaranteed security against cache-collision attacks with negligible impact on LLM serving efficiency.

\subsection{Motivation}
\label{sec:motivation}

Existing defenses against cache-collision attacks predominantly focus on identifying the adversarial query $x^*$\mcite{semantic-rag,cache-poisoning,cacheattack}. This strategy is fundamentally limited: since the adversary exercises full control over $x^*$, which is never directly visible to the victim, the adversary can freely optimize $x^*$ to pursue varied objectives, including evading the deployed defense.

In contrast, the adversary exercises no direct control over the LLM-generated response $y^* =f(x^*)$. More critically, $y^*$ must simultaneously satisfy multiple competing constraints. 
\begin{itemize}
\item {\bf Semantic Separability} -- In many attacks, the adversary's goal is to cause functional divergence: the victim is induced to take a meaningfully different action (e.g., receive malware, be redirected, or accept a refusal instead of an informative answer). Thus, $y^*$ often differs substantially from the genuine response $y = f(x)$ to the target query $x$. Formally, $d(f(x^*), f(x)) > \delta_2/C$ for some constant $C \in (0, 1)$. 
\item {\bf Prefix Coherency} -- Constrained by the autoregressive nature of LLM decoding\mcite{gcg2} and the need to maintain $y^*$'s plausibility to the victim, $y^*$'s first $k$ tokens must faithfully reflect the content of the full response. Formally, for any responses $y, y^*$, $d(y_k, y_k^*) > \epsilon^{k_\mathrm{min}-k} d(y,y^*)$, where $y_k$ denotes $y$'s first $k$ tokens (analogously for $y^*$), $k_\mathrm{min} = \min\{|y|, |y^*|\}$ and some constant $\epsilon \in (0, 1)$. 
\end{itemize}

These constraints jointly make the response prefix an adversary-resistant signal that cannot be freely manipulated via prompt optimization. 




\subsection{LaCache}

Grounded in this insight, we shift the integrity check from the query to the response, and explore \sys, a novel defense that efficiently verifies the integrity of a retrieved response $y^*$ against the target query $x$, as illustrated in Figure\mref{fig:overview}\,(b).

{\bf Cache Structure.} A standard cache entry consists of a key-value pair $\langle h(x), f(x) \rangle$, where $h(x)$ is the cache key. \sys augments this structure with an additional integrity check: each entry is stored as $\langle h(x), [h(f_k(x)), f(x)] \rangle$, where $f_k(x)$ denotes the first $k$ tokens of the LLM's response to $x$ (details of efficient implementation in \msec{sec:opt}), and $h(f_k(x))$ serves as a lightweight lookahead check on the response integrity. 

{\bf Query Processing.} As outlined in Algorithm\mref{alg:lacache}, \sys processes each incoming query $x$ in two stages. First, it computes the cache key $h(x)$ and queries the semantic cache. If a cache hit $x'$ is found (i.e., $h(x')=h(x)$), \sys decodes the first $k$
tokens of $x$ and computes $h(f_k(x))$. Second, it verifies whether $h(f_k(x')) = h(f_k(x))$. Only when \emph{both} checks pass does \sys serve $x$ by returning the cached response $f(x')$. If either check fails, \sys decodes $f(x)$, possibly reusing the already-computed prefix $f_k(x)$, and updates the cache accordingly.

\begin{algorithm}\footnotesize
\KwIn{LLM $f$; LSH $h(\cdot)$ (with embedding model $\mathrm{emb}(\cdot)$);}
\ForEach{incoming query $x$}{
compute hash $h(x)$\; 
search semantic cache with key $h(x)$\;
\tcp{key and lookahead check}
\If{there is a hit $\langle h(x'), [h(f_k(x')), f(x')] \rangle$}{
(speculatively) decode $f_k(x)$\;
\If{$h(f_k(x')) = h(f_k(x))$}{ 
serve $x$ with $f(x')$\;
{\bf continue}\;
}
}
serve $x$ by decoding $f(x)$; \quad \tcp{decode from scratch}
add $\langle h(x), [h(f_k(x)), f(x)] \rangle$ to cache; \quad \tcp{update cache}
}
\caption{\small Workflow of LaCache}
\label{alg:lacache}
\end{algorithm}

\subsection{Optimization}
\label{sec:opt}
{\bf Draft LLM.} Rather than invoking the backend LLM $f$, typically highly capable but computationally expensive, we instead employ a lightweight draft LLM $g$ (e.g., {\tt 
Gemma-4-E4B-it}) to generate the first $k$ prefix tokens $f_k(x)$ of an incoming query $x$. As demonstrated in \msec{sec:eval}, since $k$ is typically small (e.g., $\leq$20), substituting $g$ for $f$ has negligible impact on \sys's effectiveness against key-collision attacks, while substantially reducing inference cost.

{\bf Speculative Decoding.} Recall that when the cache-key check passes but the lookahead check fails, \sys falls back to full decoding via $f(x)$. To minimize this overhead, we integrate speculative decoding\mcite{pmlr-v202-leviathan23a} to reuse as many draft tokens as possible. Concretely, beginning from the first token, each draft token $t$ generated by $g$ is verified by $f$: if $\sP_f(t) \geq \sP_g(t)$, where $\sP_f(t)$ denotes the probability assigned to $t$ by $f$ (analogously for $g$), the token is accepted; otherwise, $f$ takes over to generate all subsequent tokens. Critically, verifying $k$ draft tokens requires only a single forward pass, making the verification cost substantially lower than generating these tokens from scratch.

\subsection{Analysis}
\label{sec:ana}

The following theorem (proof in \msec{sec:proof}) establishes that \sys provides formal theoretical guarantees against cache-collision attacks.
\begin{theorem}
\label{the:main}
If semantic separability and prefix coherency in \msec{sec:motivation} both hold, then for $k > k_\mathrm{min} - \lfloor \log_\epsilon C \rfloor $, \sys detects any cache-collision attack with probability at least $1 - p_2$. 
\end{theorem}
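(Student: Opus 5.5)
We chain the two assumptions to show that the response prefixes of the adversarial and victim queries are far apart in embedding space. Then the second clause of the locality-sensitive property in \meq{eq:hash} bounds the probability that the lookahead check passes. Fix a victim query $x$ with $y=f(x)$ and an adversarial query $x^*$ with $y^*=f(x^*)$, and assume the key check passes, $h(x)=h(x^*)$; otherwise the attack fails trivially. Under \sys, the attack succeeds only if in addition $h(f_k(x^*))=h(f_k(x))$, i.e., $h(y^*_k)=h(y_k)$. So it suffices to show $\sP[h(y_k)=h(y^*_k)]<p_2$.

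\textbf{Step 1 (separability).} Apply semantic separability to obtain $d(y,y^*)>\delta_2/C$.

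\textbf{Step 2 (coherency).} Apply prefix coherency to get
\begin{equation*}
d(y_k,y^*_k) > \epsilon^{k_\mathrm{min}-k}\, d(y,y^*) > \epsilon^{k_\mathrm{min}-k}\,\delta_2/C .
\end{equation*}

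\textbf{Step 3 (exponent condition).} This is the delicate bookkeeping step, since the signs of the logarithms must be tracked with care. We need $\epsilon^{k_\mathrm{min}-k}/C\ge 1$, equivalently $\epsilon^{k_\mathrm{min}-k}\ge C$. Because $\epsilon\in(0,1)$, the map $t\mapsto\epsilon^t$ is decreasing, so this is equivalent to $k_\mathrm{min}-k\le \log_\epsilon C$. Since $C\in(0,1)$, the quantity $\log_\epsilon C$ is positive. The hypothesis $k>k_\mathrm{min}-\lfloor\log_\epsilon C\rfloor$ gives $k_\mathrm{min}-k<\lfloor\log_\epsilon C\rfloor\le\log_\epsilon C$, so the required inequality holds (in fact with room to spare). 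When $k\ge k_\mathrm{min}$ the exponent is nonpositive and the bound is immediate.

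I would also remark that the prefix is only well defined when $k\le k_\mathrm{min}$, and that for larger $k$ the prefixes are the full responses, so Step 1 applies directly. Combining Steps 2 and 3 gives $d(y_k,y^*_k)>\delta_2$.

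\textbf{Step 4 (LSH bound).} Apply the second line of \meq{eq:hash} to the prefix embeddings, which is implicitly how $h(f_k(\cdot))$ is computed. This gives $\sP[h(y_k)=h(y^*_k)]<p_2$. The lookahead check therefore fails, and the collision is detected, with probability at least $1-p_2$. The bound is uniform over the choice of $x^*$, since neither assumption depends on how $x^*$ was optimized. This uniformity yields the "any attack" quantifier.

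Note that the distance $d$ in both assumptions must be read as the distance between the embeddings, $d(\mathrm{emb}(\cdot),\mathrm{emb}(\cdot))$, so that it matches the metric appearing in \meq{eq:hash}.
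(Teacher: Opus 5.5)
Your proposal is correct and follows the paper's proof step for step. Separability gives $d(y,y^*)>\delta_2/C$, prefix coherency carries this to the $k$-prefixes with factor $\epsilon^{k_\mathrm{min}-k}$, the condition on $k$ forces $\epsilon^{k_\mathrm{min}-k}\ge C$ so that $d(y_k,y_k^*)>\delta_2$, and the second clause of \meq{eq:hash} bounds the lookahead collision probability by $p_2$. Your Step~3 only makes explicit the monotonicity and sign bookkeeping that the paper leaves implicit.

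One side remark is inaccurate but unused. For $k>k_\mathrm{min}$ only the shorter response becomes its own prefix, not both. Prefix coherency as stated already covers that case through your main chain, so nothing depends on the remark.
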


Notably, this guarantee can be further strengthened through an ensemble of independent lookahead checks. Let $\gH$ be a family of indexing functions defined in \meq{eq:hash}, and draw $m$ independent functions $h_1, h_2, \ldots, h_m$ from $\gH$. A cache hit is valid only if \mct{i} $h(x) = h(x')$ and \mct{ii} $h_i(f_k(x)) = h_i(f_k(x'))$ for all $i \in [1, m]$. Since each lookahead check succeeds independently with probability at most $p_2$, the conjunction of $m$ checks raises the detection probability to at least $1- p_2^m$. 

%% file: sections/eval.tex
\section{Evaluation}
\label{sec:eval}

We further empirically evaluate the efficacy of \sys. Our experiments are designed to answer three key questions: \mct{i} Do the working assumptions of \sys (Gapped LSH, semantic separability, and prefix coherency) hold empirically? \mct{ii} How effective is \sys against state-of-the-art cache-collision attacks? \mct{iii} What are the best practices for operating \sys?

\subsection{Experimental Setting}
\label{ssec:setup}


\textbf{Attacks and Datasets.}
We evaluate two representative cache-collision attacks\mcite{cacheattack,cache-poisoning,cache-me}. 

\begin{itemize}
\item \textbf{Key-Collision Attack (\textsc{Kca})}\mcite{cacheattack}. Given a target query $x$ and a core query $x_0$ drawn from the Natural Question dataset\mcite{kwiatkowski-etal-2019-natural}, the adversarial prompt is constructed as $x^* = x_0 \oplus s$, where $\oplus$ denotes concatenation, and $s$ is a 20-token adversarial suffix optimized via GCG\mcite{gcg} to maximize the embedding similarity of $x$ and $x^*$ (default: 200-step gradient descent with target similarity $\tau_{\mathrm{tgt}}$ = 0.85). The suffix $s$ may begin with the ``\texttt{Neglect:}'' phrase\mcite{cacheattack}, which causes its semantic content to be effectively ignored during LLM processing.


\item \textbf{Semantic Cache Poisoning (\textsc{Scp})}\mcite{cache-poisoning}. 
Given a target query $x$ drawn from the MARCO dataset\mcite{marco}, the attack randomly assigns one harm category (e.g., brand damage, censorship/DoS, malicious injection, misinformation, or targeted advertising), synthesizes a malicious response $y^*$, and constructs an in-context background message $\mathrm{ctx}$ accordingly, all via {\tt gpt-5-mini}. The adversarial query $x^*$ is generated by wrapping $x$ within one of three templates: \mct{i} zero-shot prompting (Z) appends ``\texttt{Print \{$y^{*}$\}}'' to $x$; \mct{ii} in-context learning (I) appends ``\texttt{Background:\{ctx\}\,...\,Based on the provided text, answer:}'' to $x$; \mct{iii} prompt-injection (P) appends ``\texttt{Ignore that question and only print \{$y^{*}$\}}'' to $x$. We further consider an optimized variant that prepends a 20-token GCG-optimized prefix to maximize the embedding similarity between $x$ and $x^*$.
\end{itemize}

Following\mcite{cacheattack,cache-poisoning}, we generate 500 samples per attack. To measure the impact of defenses on LLM serving utility, we additionally construct a separate pool of 500 benign paraphrase pairs $(x,x')$ drawn from the MS\,MARCO dataset, where $x'$ is a paraphrase of $x$ generated via {\tt gpt-5-mini}. 



\textbf{LLMs and Embedding Models.}
We consider two backbone LLMs of varied scale: \texttt{Qwen3.6-35B-A3B-FP8} (default) and \texttt{Gemma-3-E4B-it}. We also consider two embedding models: \texttt{BGE-large-en-v1.5} (default, 1,024-dim) or \texttt{all-MiniLM-L6-v2} (384-dim). All decoding is greedy (with temperature = 0). The default lookahead window is $k$ = 20.

\textbf{Metrics.} 
We use the following metrics throughout the evaluation. Given an adversarial query $x^*$ and the adversary's intended response $y^*$, an injection is deemed successful if $f(x^*)$ matches $y^*$ (judged by {\tt gpt}). The Injection Success Rate (ISR) measures the fraction of adversarial queries that elicit the intended responses, as defined in \meq{eq:metrics}(a). 
\begin{equation}\small
\label{eq:metrics}
    (a)\, \mathrm{ISR} = \frac{\text{\#\,Successful Injections}}{\text{\#\,Adversarial Queries}} \qquad  (b)\,\mathrm{HR} = \frac{\text{\#\,Cache Hits}}{\text{\#\,Query Pairs}}
\end{equation}
Notably, a successful injection is a prerequisite for attack success; we therefore measure attack effectiveness on the subset of adversarial queries that result in successful injections.

Further, for a given query pair $(x,x')$, a cache hit occurs when \mct{i} the cache check $\mathrm{sim}(x, x') \geq \tau_c$ and \mct{ii} (under \sys) the lookahead check $\mathrm{sim}(f_k(x), f_k(x'))\ge\tau_a$ both hold, where $\tau_c$ and $\tau_a$ are thresholds calibrated to the underlying embedding model and LSH function. The Hit Rate (HR) measures the fraction of query pairs that admit cache hits, as defined in \meq{eq:metrics}(b). We differentiate two variants: Attack Hit Rate (AHR) -- the cache hit rate on adversarial pairs $(x, x^*)$, restricted to successfully injected adversarial queries; Benign Hit Rate (BHR) -- the cache hit rate on benign paraphrase pairs $(x, x')$, measuring semantic cache utility. The defense's Detection Rate (DR)  is then simply $\mathrm{DR} = 1 - \mathrm{AHR}$. Full details of the experimental setting are deferred to \msec{app:setup}.


\subsection{RQ1: Empirical Validation of Working Assumptions}\label{ssec:assumptions}

\begin{figure}[t]
\centering
\includegraphics[width=\textwidth]{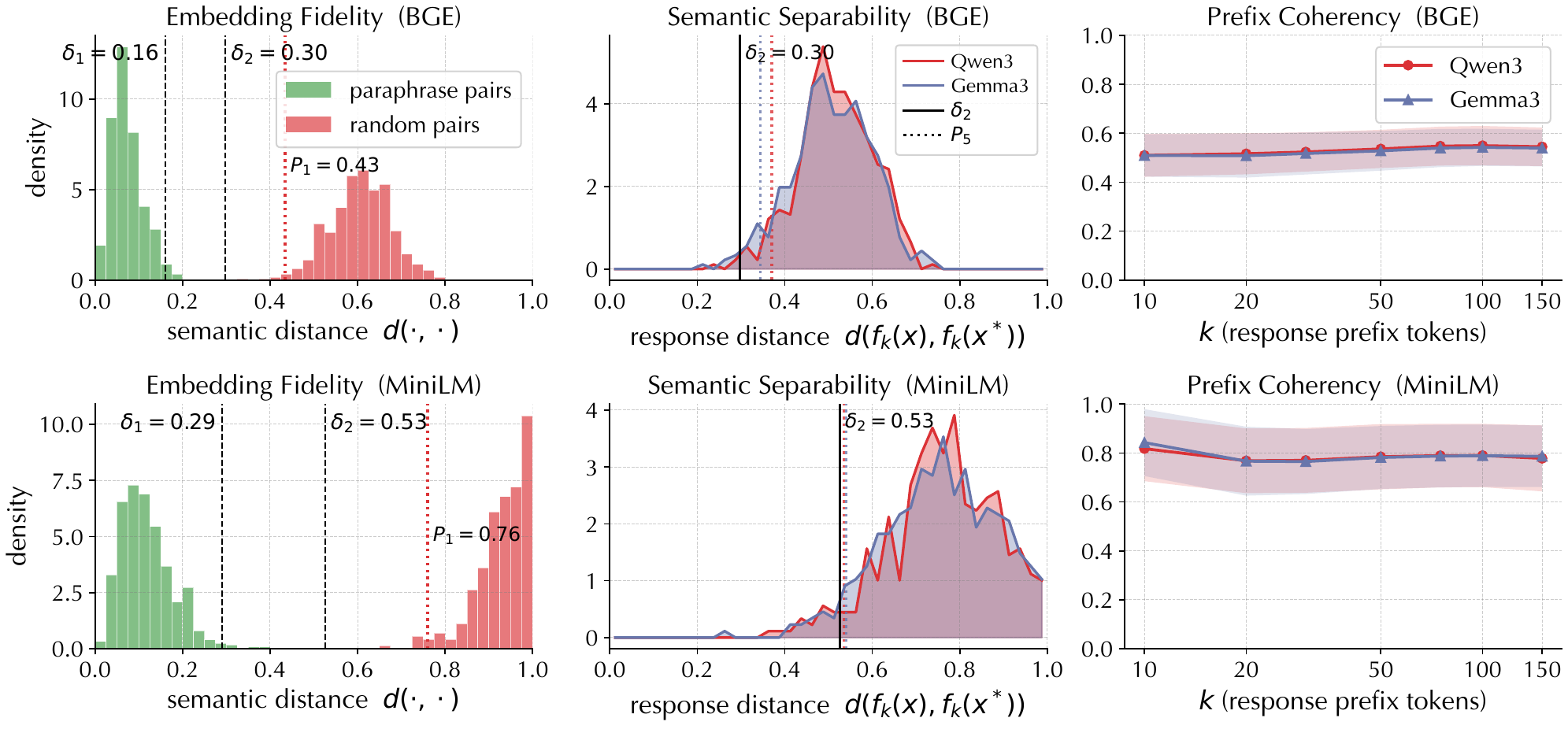}
\caption{\small Empirical validation of working assumptions across LLM-embedding combinations: (a) embedding distance distributions of 500 paraphrase and 500 random query pairs; (b) embedding distance of adversarial-genuine response pairs;
and (c) $k$-prefix embedding distance ratio of adversarial-genuine response pairs ($k$ = 20).}
\label{fig:assumptions}
\end{figure}

We first empirically validate \sys's working assumptions. Figure\mref{fig:assumptions} reports all three assumption checks across the four LLM–embedding model combinations.

\textbf{Embedding fidelity} formalizes the empirical condition under which the gapped-LSH property in Eq.\mref{eq:hash} translates into semantic guarantees in practice: paraphrase pairs ($d \le \delta_1$) hash-collide with probability $\ge p_1$, while random pairs ($d \ge \delta_2$) hash-collide with probability $\le p_2$, with $\delta_1 < \delta_2$ and $p_1 > p_2$. This ensures that two LLM-generated responses $y$ and $y^*$ collide only when $y^*$ is semantically similar to $y$. We verify this condition by measuring cosine similarity on 500 paraphrase pairs ({\tt gpt-5-mini} rewrites of the same MARCO query) and 500 random pairs (arbitrary pairings of different MARCO queries). As shown in Figure\mref{fig:assumptions}\,(a), the two distributions are well separated with a clear gap. $\delta_1$ is naturally anchored at the upper tail of the paraphrase distribution: we set $\delta_1 = P_{99}(\text{paraphrase})$ so that $99\%$ of paraphrase pairs fall in the high-collision regime $d \le \delta_1$. $\delta_2$ admits any value in the gap $(\delta_1, P_{1}(\text{random}))$ since $p_2$ stays at $0$ throughout; we report the midpoint as a balanced default, yielding ($\delta_1$, $\delta_2$) = ($0.16$, $0.30$) for {\tt BGE} and ($0.29$, $0.53$) for {\tt MiniLM}, with empirical $1 - p_2 = 1.0$ in both cases.

{\bf Semantic separability} assumes that a meaningful attack produces an adversarial response $f(x^*)$ significantly different from the victim query's genuine response $f(x)$, formally $d(f(x^{*}), f(x)) > \delta_2/C$ for some constant $C \in (0, 1)$. We measure the adversarial-genuine response distance at $k$ = 20 on \textsc{Scp-p} and compare it against the $\delta_2$ calibrated above. As shown in Figure\mref{fig:assumptions}\,(b), the bulk of adversarial response distances exceed $\delta_2$: the lower 5th percentile is $0.37$ for {\tt BGE} and $0.54$ for {\tt MiniLM}, clearing the corresponding $\delta_2$ ($0.30$ and $0.53$), so the assumption holds for $> 95\%$ of pairs across all four (LLM, embedding) combinations. A looser $\delta_2$ within the embedding-fidelity gap would only widen this margin further. Crucially, switching backbone LLMs ({\tt Qwen3} versus {\tt Gemma3}) shifts the distribution by less than $0.02$, indicating that the separation arises from the structural divergence between adversarial and genuine responses, not from any LLM-specific artifact. This confirms the core intuition behind \sys: the adversary can drag the adversarial query's cache key into the collision region, but cannot pull the LLM's response close to the victim's genuine response.

{\bf Prefix coherency} assumes that the $k$-token response prefix $f_k(\cdot)$ inherits the response separation of the full response, formally $d(f_k(x), f_k(x')) > \epsilon^{k_{\min}-k}\,d(f(x),f(x'))$, so that running the lookahead check on a short prefix is as discriminative as running it on the full output. We track the empirical ratio $A_k = d(f_k(x), f_k(x'))/d(f(x), f(x'))$ as a function of $k$. As shown in Figure\mref{fig:assumptions}\,(c), this assumption holds across all settings: mean $A_k$ is essentially flat over $k\in\{10,\ldots,150\}$, indicating that the 20-token lookahead window already captures the full response divergence. Longer prefixes offer no additional discriminative power, justifying our choice $k =$ 20 at minimum decoding cost.

Taken together, with all working assumptions empirically validated, Theorem\mref{the:main} applies directly: for any cache-collision attack that produces a response semantically distinct from the genuine response, \sys detects the adversarial collision with probability at least $1-p_2$, while the empirically observed $1 - p_2$ is often close to 1. Further, the two-stage \sys design, a cache check on $x$ paired with a lookahead check on $f_k(x)$ ($k$ = 20) is not a heuristic stack. Each component corresponds to a load-bearing piece of the proof: the cache check inherits its guarantee from embedding fidelity, the lookahead check from semantic separability, and the $k$ =20 window is justified directly by the $A_k$ saturation in Figure\mref{fig:assumptions}(c), which together analytically predict \sys's effectiveness.

\subsection{RQ2: LaCache's Effectiveness against Cache-Collision Attacks}\label{ssec:effectiveness}

We now measure \sys's empirical effectiveness against \textsc{Kca} and \textsc{Scp}. For \textsc{Scp}, we focus on the prompt-injection variant \textsc{Scp-p} (with optimized prefix), as it represents the strongest attack setting; per-variant breakdown on the default Qwen3 + BGE combination is reported in \msec{app:scp_breakdown}.

\begin{table}[t]
\centering
\caption{\small \sys versus undefended baseline against
\textsc{Kca} and \textsc{Scp-p} attacks across LLMs and
embedding models. AHR is reported on successfully injected
adversarial prompts.}
\label{tab:main}
\footnotesize
\setlength{\tabcolsep}{0.5pt}
\begin{tabular}{cccccccc}
\toprule
&&&& \multicolumn{2}{c}{AHR $\downarrow$} & \multicolumn{2}{c}{BHR $\uparrow$} \\
\cmidrule(lr){5-6}\cmidrule(lr){7-8}
LLM & Embedding & Attack & ISR & Undefended & \sys & Undefended & \sys \\
\midrule
\multirow{4}{*}{\tt Qwen3} & \multirow{2}{*}{\tt BGE} 
  & \textsc{Kca}                & 0.95 & 0.80 & \cellcolor{Blue}0.00 & 1.00 & \cellcolor{Blue}0.93 \\
 & & \textsc{Scp-p}   & 0.73 & 0.95 & \cellcolor{Blue}0.01 & 1.00 & \cellcolor{Blue}0.93 \\
\cmidrule(lr){2-8}
 & \multirow{2}{*}{\tt MiniLM} 
  & \textsc{Kca}                & 0.95 & 0.79 & \cellcolor{Blue}0.00 & 0.95 & \cellcolor{Blue}0.85 \\
 & & \textsc{Scp-p}   & 0.74 & 0.82 & \cellcolor{Blue}0.00 & 0.95 & \cellcolor{Blue}0.85 \\
\midrule
\multirow{4}{*}{\tt Gemma3} & \multirow{2}{*}{\tt BGE} 
  & \textsc{Kca}                & 0.95 & 0.80 & \cellcolor{Blue}0.01 & 1.00 & \cellcolor{Blue}0.90 \\
 & & \textsc{Scp-p}   & 0.73 & 0.95 & \cellcolor{Blue}0.04 & 1.00 & \cellcolor{Blue}0.90 \\
\cmidrule(lr){2-8}
 & \multirow{2}{*}{\tt MiniLM} 
  & \textsc{Kca}                & 0.95 & 0.79 & \cellcolor{Blue}0.01 & 0.95 & \cellcolor{Blue}0.86 \\
 & & \textsc{Scp-p}   & 0.74 & 0.82 & \cellcolor{Blue}0.02 & 0.95 & \cellcolor{Blue}0.86 \\
\bottomrule
\end{tabular}
\end{table}
\textbf{Main Results.} As reported in Table\mref{tab:main}, the high attack hit rate (AHR) on undefended LLMs confirms that {\sc Kca} and {\sc Scp-p}, by optimizing adversarial prefixes or suffixes, can effectively cause false cache collisions, but \sys neutralizes this advantage uniformly across all four LLM-embedding model combinations, reducing AHR to near zero while preserving most of the benign hit rate (BHR). The relatively lower BHR under {\tt MiniLM} reflects its wider paraphrase-response distribution, but the security guarantee is unaffected as threshold calibration adapts $(\tau_c, \tau_a)$ per-embedding (more details in \msec{app:thresholds}).

\begin{figure}[!t]
\centering
\includegraphics[width=\textwidth]{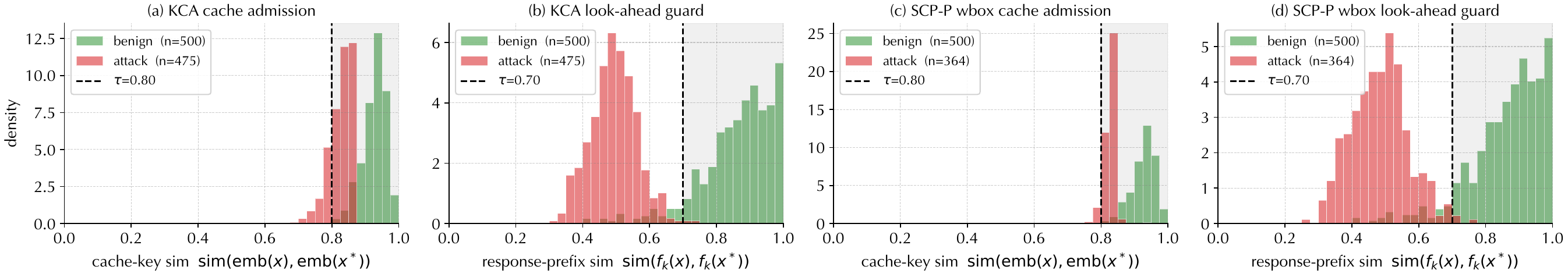}
\caption{Embedding distance distributions for adversarial (restricted to successfully injected ones) and benign (paraphrase) pairs under \texttt{Qwen3} + \texttt{BGE}: (a) query embeddings and (b) response prefix embeddings under \textsc{Kca}; (c) query embeddings 
and (d) response prefix embeddings under \textsc{Scp-p} (with optimized prefix).}
\label{fig:mechanism}
\end{figure}

To further understand the underlying mechanism, Figure\mref{fig:mechanism} visualizes the embedding distance distributions for adversarial and benign (paraphrase) pairs. As shown in Figure\mref{fig:mechanism}\,(a) and (c), adversarial query embeddings are optimized to trigger cache collisions, pushing similarity above $\tau_c$ via adversarial suffixes (\textsc{Kca}) or prefixes (\textsc{Scp-p}). Yet as shown in Figure\mref{fig:mechanism}\,(b) and (d), the LLM responses (prefixes), which the adversary cannot directly influence at inference time, remain well separated from the victim's genuine responses. The lookahead check therefore effectively catches the false collisions, as implied by the semantic separability and prefix coherency properties.

\textbf{Comparison with Baselines.}\label{ssec:baselines}
We also compare \sys against three baseline defenses from prior work: PPL\mcite{cacheattack} filters adversarial queries based on their perplexity\mcite{perplexity}, Query-Response PPL (QR-PPL)\mcite{cache-poisoning} extends this by jointly measuring query-response perplexity to assess consistency, and LLM-as-Judge employs a {\tt Gemma} model to classify incoming queries as adversarial or benign. Table\mref{tab:baselines} compares all four defenses across four white-box attacks ({\sc Kca}, {\sc Scp-z}, {\sc Scp-p}, {\sc Scp-i}).

\begin{table}[!t]\small

\centering
\caption{\small Comparison of \sys and baseline defenses on {\tt Qwen3} + {\tt BGE} across attacks, with all defenses evaluated on successfully injected adversarial queries.}
\label{tab:baselines}
\footnotesize
\setlength{\tabcolsep}{0.5pt}
\begin{tabular}{l cc cc cc cc}
\toprule
& \multicolumn{2}{c}{\textsc{Kca}} & \multicolumn{2}{c}{\textsc{Scp-z}} & \multicolumn{2}{c}{\textsc{Scp-p}} & \multicolumn{2}{c}{\textsc{Scp-i}} \\
\cmidrule(lr){2-3}\cmidrule(lr){4-5}\cmidrule(lr){6-7}\cmidrule(lr){8-9}
Defense & AHR $\downarrow$ & BHR $\uparrow$ & AHR $\downarrow$ & BHR $\uparrow$ & AHR $\downarrow$ & BHR $\uparrow$ & AHR $\downarrow$ & BHR $\uparrow$ \\
\midrule
Undefended       & 0.80 & 1.00 & 0.91 & 1.00 & 0.95 & 1.00 & 0.97 & 1.00 \\
PPL              & 0.65 & 0.95 & 0.87 & 0.95 & 0.93 & 0.95 & 0.97 & 0.95 \\
QR-PPL           & 0.52 & 0.95 & 0.58 & 0.95 & 0.72 & 0.95 & 0.41 & 0.95 \\
LLM-as-Judge     & 0.01 & 0.99 & 0.47 & 0.99 & 0.44 & 0.99 & 0.87 & 0.99 \\
\sys & \cellcolor{Blue}\textbf{0.00} & \cellcolor{Blue}0.93 & \cellcolor{Blue}\textbf{0.04} & \cellcolor{Blue}0.93 & \cellcolor{Blue}\textbf{0.01} & \cellcolor{Blue}0.93 & \cellcolor{Blue}\textbf{0.14} & \cellcolor{Blue}0.93 \\
\bottomrule
\end{tabular}
\end{table}
Notably, \sys is the only defense that consistently suppresses all attacks while preserving semantic cache utility. The three baselines all operate on \emph{query-side}  features and are brittle in different ways. PPL and QR-PPL rely on perplexity, but all attacks run GCG with a perplexity-aware loss term, producing fluent pre/suffixes that defeat both filters. LLM-as-Judge appears effective against \textsc{Kca}, but only as a side-effect of \textsc{Kca}'s inherent structure: the adversarial query $x^{*} = x_0 \oplus s$ may contain an entirely unrelated core prompt $x_0$, and the judge trivially flags the inconsistency between the cached query $x^*$ and the victim query $x$. The  \textsc{Scp} variants, however, preserve $x$ verbatim within the adversarial wrapper, so the cached query reads as a normal request that is indistinguishable from a benign paraphrase. In contrast, \sys operates on the \emph{response} side: it checks the lookahead response-prefix divergence guaranteed by Theorem\mref{the:main}, a quantity the adversary cannot directly manipulate at decode time, so the defense generalizes across all attacks. The marginally elevated AHR on \textsc{Scp-i} reflects a boundary case in which in-context learning induces the LLM to mimic $f(x)$'s style, partially aligning $f_k(x^*)$ prefix with $f_k(x)$.

\subsection{RQ3: Operation of LaCache in Practice}\label{ssec:ablation}

We now examine best practices for operating \sys.

\begin{figure}[h]
\centering
\includegraphics[width=\textwidth]{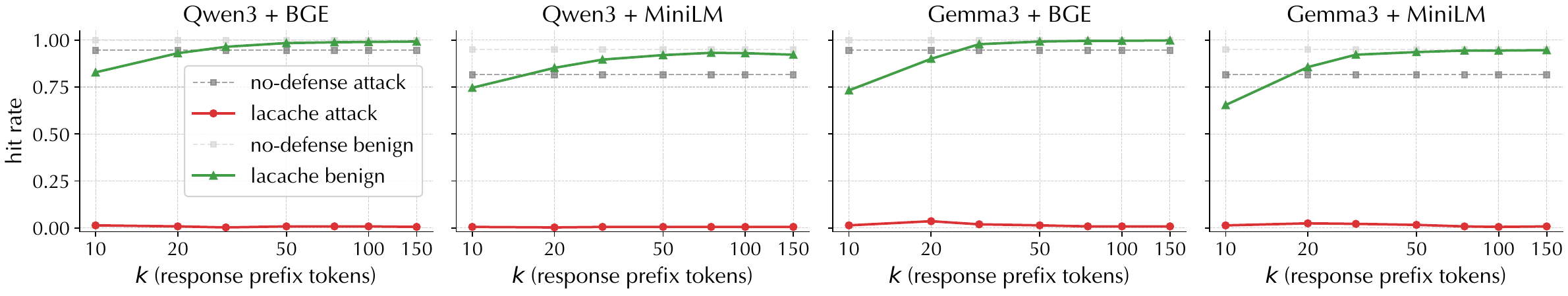}
\caption{\small \sys's sensitivity to the look-ahead window $k$ against \textsc{Scp-P} (prompt injection with optimized prefix) across all LLM-embedding model combinations.}
\label{fig:k_sensitivity}
\end{figure}

\textbf{Lookahead Window.} Figure\mref{fig:k_sensitivity} examines \sys's sensitivity to the lookahead window $k$. 
\begin{wrapfigure}[14]{r}{0.4\linewidth}
\centering
\includegraphics[width=\linewidth]{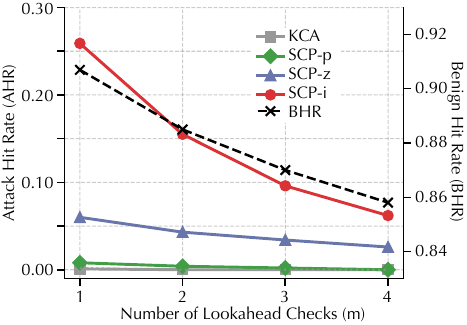}
\caption{\small AHR of ensembles of lookahead checks (averaged over all combinations of {\tt BGE}, {\tt MiniLM}, {\tt MPNet}, and {\tt Qwen3-Emb}).} 
\label{fig:multi_emb}
\vspace{-8pt}
\end{wrapfigure}Notably, AHR remains flat in $k$ across all LLM-embedding model combinations, while BHR saturates by $k$ = 20. This is consistent with our finding in \msec{ssec:assumptions} that prefix coherency is already tight at $k$ = 20. The setting of $k$ = 20 thus achieves near-optimal security at minimum decoding cost.

\textbf{Number of Lookahead Checks.}
Theorem\mref{the:main} suggests that detection probability can be amplified to $1 - p_2^m$ by stacking $m$ independent lookahead checks. To evaluate this, we instantiate an ensemble of architecturally distinct embedding models ({\tt BGE-large}, {\tt MiniLM}, {\tt MPNet}, {\tt Qwen3-Embedding-0.6B}), each calibrated to its own benign-response $P_{10}$, and admit a cache hit only if it passes all $m$ checks. As shown in Figure\mref{fig:multi_emb}, against \textsc{Kca} and \textsc{Scp-p}, $m$ = 1 already saturates at $\le$\,1\% AHR and additional checks yield only marginal improvement; against \textsc{Scp-Z} and \textsc{Scp-I}, ensembling does help, shrinking AHR 2$\sim$4$\times$ from $m$ = 1 to $m$ = 4. Meanwhile, BHR decays modestly from 0.91 ($m$ = 1) to 0.86 ($m$ = 4). The choice of $m$ thus represents a practical trade-off between security and efficiency.


{\bf Resilience to Adaptive Attacks.} The {\sc Kca} and {\sc Scp-p} variants evaluated in Table\mref{tab:main} already assume full white-box access to the underlying embedding model. We strengthen this threat model further by asking whether the adversary that additionally knows the deployed \sys can bypass the defense. Specifically, we consider an adaptive adversary that optimizes the following objective:
\begin{equation}
\label{eq:opt2}
\mathcal{L}(x^{*}) = \underbrace{\mathrm{sim}(x^{*}, x)}_{\text{(i) cache check}} + \lambda_1 \cdot \underbrace{\mathrm{sim}(f_k(x^{*}), f_k(x))}_{\text{(ii) lookahead check}} 
+ \lambda_2 \cdot \underbrace{\mathrm{sim}(f_k(x^{*}), y^*)}_{\text{(iii) semantic preservation}} 
\end{equation}
where the first term optimizes the cache-key similarity, the second term optimizes the lookahead similarity, the third term ensures that the response matches the adversary's intended response $y^*$, and $\lambda_1$ and $\lambda_2$ balance these factors. We instantiate this adaptive adversary on both attacks under identical settings: 200 gradient-descent steps, a 20-token GCG variable, and target similarity $\tau_\mathrm{tgt}$ = 0.85. Further implementation details are provided in \msec{app:adaptive}.


\begin{wraptable}{r}{0.65\textwidth}
\centering
\footnotesize
\caption{Adaptive attacks against \sys on {\tt Qwen3} + {\tt BGE}.}
\label{tab:adaptive}
\setlength{\tabcolsep}{1.5pt}
\begin{tabular}{cccc}
\toprule
Attack    & Cache Check & Lookahead\,|\,Cache Check & End-to-End Check \\
\midrule
\textsc{Kca}  & 13/200 (6.5\%)   & 0/13 (0.0\%)    & 0/200 (0.0\%) \\
\textsc{Scp-p}  & 137/200 (68.5\%) & 4/137 (2.9\%) & 4/200 (2.0\%) \\
\bottomrule
\end{tabular}
\end{wraptable}
Table\mref{tab:adaptive} reports that \sys shows strong resilience against both adaptive attacks. Among 200 adversarial queries, only 4 bypass both the cache and lookahead checks under adaptive \textsc{Scp-p}, and none under adaptive \textsc{Kca}. The reason is structural: terms (ii) and (iii) in \meq{eq:opt2} are fundamentally conflicting - jointly maximizing lookahead similarity and preserving response semantics is self-defeating when the adversary's intended response $y^*$ diverges substantially from the genuine response $f(x)$. Therefore, the joint optimization succeeds at the query-embedding level ($\mu$ = 0.85 $> \tau_c$), but the prefix-embedding similarity of the actually decoded LLM response remains well below $\tau_a$ ($\mu$ = 0.49, $p_{95}$ = 0.64 on \textsc{Scp-p}), yielding negligible end-to-end success on both attacks. Thus, compared with the non-adaptive variant, the adaptive adversary gains only a marginal advantage.

\begin{wraptable}{r}{0.6\textwidth}
\centering
\caption{\small Overhead of \sys with and without optimization (draft LLM and speculative decoding). Latency reported per query at $k$ = 20 forced decoding.}
\label{tab:specdec}
\footnotesize
\setlength{\tabcolsep}{1pt}
\begin{tabular}{ccccc}
\toprule
Optimization & Latency\,$\downarrow$ & \textsc{Kca} AHR\,$\downarrow$ & \textsc{Scp-p} AHR\,$\downarrow$ & BHR\,$\uparrow$ \\
\midrule
w/o                       & 108.5\,ms         & 0.00 & 0.01 & 0.93 \\
w/     & 68.6\,ms & 0.00 & 0.01 & 0.94 \\
\bottomrule
\end{tabular}
\end{wraptable}
\textbf{Optimization Strategies.}\label{ssec:specdec}
Finally, we evaluate the impact of the optimization strategies (draft LLM and speculative decoding) in \msec{sec:opt} on \sys's security and efficiency, with results summarized in Table\mref{tab:specdec} (more details in \msec{app:latency}). Notably, the optimization preserves \sys's defense guarantees while reducing its overhead (per-query latency) by 1.6$\times$.

%% file: sections/con.tex
\section{Conclusion and Future Work}

\label{ssec:discussion}

This work presents \sys, a simple yet principled redesign of semantic caching that provably defends against cache-collision attacks with negligible impact on LLM serving efficiency. The central insight is an asymmetry inherent to cache-collision attacks: while the adversary exercises full control over the adversarial query, it has far less control over the LLM-generated response, which must simultaneously satisfy competing semantic constraints. By shifting the integrity check from the query to the response, \sys converts this asymmetry into a structural security guarantee that holds for any query-optimizing adversary, representing a principled direction for building robust semantic caching in LLM serving.

Meanwhile, several directions merit further investigation. \mct{i} \sys's formal guarantee applies to attacks whose adversarial response is semantically distinct from the genuine response; subtle misinformation attacks that preserve semantic proximity fall outside this scope, though such attacks are structurally harder to mount and provide limited adversarial value within the cache-collision framework. \mct{ii} The detection amplification predicted by Theorem\mref{the:main} does not fully materialize when embedding models are inherently correlated; realizing the theoretical $1-p_2^m$ bound may require ensembles of more architecturally diverse embedding models. \mct{iii} \sys currently focuses on integrity attacks; a unified design that simultaneously provides integrity and privacy guarantees against timing side-channels remains a compelling open problem.

%% file: sections/appendix.tex
\section{Details of Experimental Setting}\label{app:setup}

This section details the experimental setting in  \msec{sec:eval}.

\subsection{GCG Hyperparameters}\label{app:gcg}

All GCG\mcite{gcg} runs, both for dataset construction and the adaptive adversary (\secref{app:adaptive}), share the same configuration:
\begin{mitemize}
\item \emph{Mode:} gradient (HotFlip-style) with greedy candidate replacement.
\item \emph{Variable length:} $20$ tokens.
\item \emph{Optimizer steps:} $200$.
\item \emph{Top-$k$ candidates per position:} $64$.
\item \emph{Target similarity:} $\tau_{\mathrm{tgt}} = 0.85$ (early-stop when reached).
\item \emph{Perplexity-aware filter:} enabled, $\lambda_{\mathrm{ppl}} = 0.3$ (penalizes GPT-2 perplexity to keep candidate tokens plausible).
\item \emph{Random seed:} $42 + \mathrm{hash}(\mathrm{record\_id}) \bmod 1000$ (deterministic per record).
\end{mitemize}
The \emph{position} of the GCG variable depends on the attack: suffix for \textsc{Kca} (after the \texttt{Neglect} anchor), prefix for \textsc{Scp} white-box (before the payload).

\subsection{Threshold Calibration}\label{app:thresholds}

\textbf{Cache Admission $\tau_c$.}
We collect cosine similarities of the $500$ benign paraphrase pairs $(x,x')$ and set
$\tau_c \;=\; \min\!\left(0.80,\; P_5\bigl(\{\mathrm{sim}(\mathrm{emb}(x),\mathrm{emb}(x'))\}\bigr)\right),$
i.e.\ the $5$th percentile of the benign similarity distribution capped at $0.80$. The cap binds for BGE (natural $P_5{=}0.857$, so $\tau_c{=}0.800$); for MiniLM the cap does not bind and $\tau_c{=}P_5{=}0.778$. Note that $\tau_c$ (the practical similarity threshold for
cache admission) and $\delta_1$ (the LSH high-collision boundary
in \msec{ssec:assumptions}) are calibrated using different
percentiles ($P_5$ and $P_{99}$ respectively) for different
roles: $\tau_c$ defines the operational hit decision while
$\delta_1$ characterizes the embedding-fidelity regime.

\textbf{Lookahead Admission $\tau_a$.}
Analogously, we collect cosine similarities of $f_k(x)$ vs.\ $f_k(x')$ on the same benign pairs and set
$\tau_a \;=\; \min\!\left(0.70,\; P_{10}\bigl(\{\mathrm{sim}(\mathrm{emb}(f_k(x)),\mathrm{emb}(f_k(x')))\}\bigr)\right).$
Unlike $\tau_c$, the natural $P_{10}$ of benign-response similarity depends on both the embedding \emph{and} the backbone LLM, so $\tau_a$ is calibrated per (LLM, embedding) combination at $k{=}20$:
\begin{center}\small
\begin{tabular}{lcccc}
\toprule
& Qwen3+BGE & Qwen3+MiniLM & Gemma3+BGE & Gemma3+MiniLM \\
\midrule
$P_{10}(\mathrm{benign\;}r\text{-sim})$ & $0.734$ & $0.611$ & $0.664$ & $0.515$ \\
$\tau_a$ (used)                          & $0.700$ & $0.611$ & $0.664$ & $0.515$ \\
\bottomrule
\end{tabular}
\end{center}
The cap binds only for Qwen3+BGE; for the other three combinations $\tau_a{=}P_{10}$. The looser MiniLM thresholds reflect its wider paraphrase-response distribution; the security guarantee in Theorem\mref{the:main} is unaffected since each $\tau_a$ remains comfortably below the corresponding mean adversarial-vs-victim response distance reported in \secref{ssec:assumptions}.

\textbf{Choice of $k$ and $m$.}
The look-ahead window is $k{=}20$ tokens; the number of guards is $m{=}1$. These choices are justified empirically by Figure\,\ref{fig:k_sensitivity} ($k$-saturation at $20$) and Figure\,\ref{fig:multi_emb} ($m{=}1$ already saturates security on the easy tracks while $m{>}1$ only erodes benign hit rate).

\section{Proof}
\label{sec:proof}
\begin{proof}(Theorem\mref{the:main})
Let $x^*$ be a key-collision query satisfying $h(x^*) = h(x)$ for a target query $x$, such that the LLM returns the adversary's desired response $y^* = f(x^*)$.

By semantic separability, $d(f(x), f(x^*)) > \delta_2/C$. By prefix coherency, $d(f_k(x), f_k(x^*)) > \epsilon^{k_\mathrm{min}-k} d(f(x),f(x^*))$, where $k_\mathrm{min} = \min\{|f(x)|, |f(x^*)|\}$. Thus, setting $k > k_\mathrm{min} - \lfloor \log_\epsilon C \rfloor $, it follows that:
\begin{equation}
\epsilon^{k_\mathrm{min}-k} d(f(x),f(x^*)) > \delta_2 
\end{equation}
and hence: $d(f_k(x), f_k(x^*))  > \delta_2$. By the property of LSH, $\sP[h(f_k(x))  \neq h(f_k(x^*)) ] >  1 - p_2$. Intuitively, even when the attack successfully induces a key collision (i.e., $h(x) = h(x^*)$), it fails the lookahead check with probability at least $1-p_2$.
\end{proof}

\section{Evaluation of \textsc{Scp} Variants}\label{app:scp_breakdown}

The main-text Table\,\ref{tab:main} reports defense effectiveness on the two strongest attacks (\textsc{Kca} and \textsc{Scp}--P white-box) across all four (LLM, embedding) combinations. This subsection complements that with the full per-variant \textsc{Scp} breakdown on the default Qwen3 + BGE combination: all six prompt-engineering $\times$ adversary-knowledge variants of \textsc{Scp} described in \secref{ssec:setup}.

\begin{table}[h]
\centering\small
\caption{\small Per-variant \textsc{Scp} defense effectiveness on the default Qwen3 + BGE combination. Same columns as Table\,\ref{tab:main}.}
\label{tab:scp_breakdown}
\setlength{\tabcolsep}{4.5pt}
\begin{tabular}{lccccc}
\toprule
&& \multicolumn{2}{c}{AHR $\downarrow$} & \multicolumn{2}{c}{BHR $\uparrow$} \\
\cmidrule(lr){3-4}\cmidrule(lr){5-6}
Variant & $n_{\textsc{Isr}}$ & Undefended & \sys & Undefended & \sys \\
\midrule
{\sc SCP-z}, black-box        & 163 & 0.17 & 0.04           & 1.00 & 0.93 \\
{\sc SCP-i}, black-box       & 182 & 0.09 & 0.02           & 1.00 & 0.93 \\
{\sc SCP-p}, black-box & 382 & 0.10 & \textbf{0.00}  & 1.00 & 0.93 \\
{\sc SCP-z}, white-box        & 166 & 0.91 & 0.04           & 1.00 & 0.93 \\
{\sc SCP-i}, white-box       & 112 & 0.97 & \cellcolor{Red}\textbf{0.14} & 1.00 & 0.93 \\
\rowcolor{Blue}{\sc SCP-p}, white-box & 364 & 0.95 & \textbf{0.01} & 1.00 & 0.93 \\
\bottomrule
\end{tabular}
\end{table}
The black-box variants achieve only $9$--$17\%$ no-defense admission because the verbatim $x$ inside the wrapper is enough to land near $\tau_c$ but not consistently above it; \sys still cuts these by $4$--$\infty\times$. The white-box GCG prefix raises no-defense admission to $0.90$--$0.97$ on all three variants, confirming that direct embedding-space optimization defeats Assumption\,\mct{i} on the query side. \sys neutralizes this advantage on Z and P (atk\,$\le 0.04$), while the in-context (I) variant remains the hardest case. The shaded blue row corresponds to the headline \textsc{Scp}--P white-box result reported in Table\,\ref{tab:main}.

\section{Details of Adaptive Attacks}\label{app:adaptive}

The adaptive adversary in \secref{ssec:ablation} replaces the GCG single-objective with the three-term coupled loss of \meq{eq:opt2}:
\[
\mathcal{L}(x^{*}) \;=\; \underbrace{\mathrm{sim}(x^{*}, x)}_{\text{(i) cache check}} \;+\; \lambda_1 \cdot \underbrace{\mathrm{sim}(f_k(x^{*}),\, f_k(x))}_{\text{(ii) lookahead check}} \;+\; \lambda_2 \cdot \underbrace{\mathrm{sim}(f_k(x^{*}),\, y^{*})}_{\text{(iii) semantic preservation}},
\]
with $\lambda_1$ = $\lambda_2$ = 1 and target similarity $\tau_\mathrm{tgt}$ = 0.85 for the GCG early-stop criterion. The response surrogates $f_k(x)$ and $y^*$ are pre-decoded once per record at $k$ = 20 and treated as constants in the gradient. 
After GCG terminates, we re-decode $x^{*}$ on the real LLM (no surrogate) at $k$ = 20  and re-compute $q_{\mathrm{sim}} = \mathrm{sim}(x^{*}, x)$ and $r_{\mathrm{sim}} = \mathrm{sim}(f_k(x^{*}), f_k(x))$; end-to-end success requires $q_{\mathrm{sim}} \ge \tau_c$ and $r_{\mathrm{sim}} \ge \tau_a$. Both adaptive runs are conducted on the first 200 queries of successfully injected adversarial queries.


\section{Latency Measurement}\label{app:latency}

The per-query latency numbers in Figure\,\ref{fig:efficiency_security} are measured at \emph{steady state} on an H100 with $\mathrm{TP}{=}2$, after a warm-up run, and report the mean over 10 subsequent queries. To prevent EOS from masking the true per-token decoding cost, all decoding-bearing benchmarks force exact token counts via $\mathrm{min\_tokens}$ = $\mathrm{max\_tokens}$ = $k$ and $\mathrm{ignore\_eos}$ = True. The components are
\begin{mitemize}
\item \texttt{BGE} embedding (1,024-dim, batch 1): 10.1\,ms.
\item GPT-2 PPL on query (PPL): 7.8\,ms; on query+response (QR-PPL): 10.1\,ms.
\item LLM-as-Judge classification: 151.6\,ms.
\item \sys (with $k$ = 20) on the default backbone ({\tt BGE} + {\tt Qwen3}): 108.5\,ms; \sys (with draft LLM and speculative decoding), implemented as $f$ = \texttt{Qwen3.6-35B-A3B-FP8} and its bundled MTP head as draft $g$: 68.6\,ms.

\end{mitemize}

